\newtheorem{theorem}{Theorem}
\newtheorem{definition}{Definition}
\newtheorem{lemma}{Lemma}
\title{CInC Flow: Characterizable Invertible 3$\times$3 Convolution} % Coined rename as a model and give a short name like CInC, CharInv Convolution, InvConv-Flow
\author[1]{Sandeep Nagar}
\author[2]{Marius Dufraisse}
\author[1]{Girish Varma}
\affil[1]{%
    Machine Learning Lab\\
    International Institute of Information Technology\\
    Hyderabad, India
}
\affil[2]{%
    Computer Science Dept.\\
    École Normale Supérieure (ENS)\\
    Paris-Saclay, France
}
\begin{document}
\maketitle

\begin{abstract}
Normalizing flows are an essential alternative to GANs for generative modelling, which can be optimized directly on the maximum likelihood of the dataset. They also allow computation of the exact latent vector corresponding to an image since they are composed of invertible transformations. However, the requirement of invertibility of the transformation prevents standard and expressive neural network models such as CNNs from being directly used. Emergent convolutions were proposed to construct an invertible 3$\times$3 CNN layer using a pair of masked CNN layers, making them inefficient. We study conditions such that 3$\times$3 CNNs are invertible, allowing them to construct expressive normalizing flows. We derive necessary and sufficient conditions on a padded CNN for it to be invertible. Our conditions for invertibility are simple, can easily be maintained during the training process. Since we require only a single CNN layer for every effective invertible CNN layer, our approach is more efficient than emerging convolutions. We also proposed a coupling method, Quad-coupling. We benchmark our approach and show similar performance results to emergent convolutions while improving the model's efficiency.
\end{abstract}
\section{Introduction}\label{sec:intro}
% Generative models (all types, applications, relevance)
The availability of large datasets has resulted in improved machine learning solutions for more complex problems. However, supervised datasets are expensive to create. Hence unsupervised methods like generative models are increasingly worked on. Generative models that have been proposed can be broadly categorized under Likelihood-based methods and Generative Adversarial Methods. For example, an optimization algorithm could directly minimize the former's negative log-likelihood of the unsupervised examples. At the same time, in the latter, the loss function itself is modelled as a discriminator network that is trained alternatively. Hence likelihood-based methods directly optimize the probability of examples. In contrast, in GANs, the function being optimized is implicit and hard to reason about.

% Normalizing flows and their advantages over other generative models (RealNVP old story)

An essential type of Likelihood-based Generative models is normalizing flow-based models. Normalizing flow-based models transform a latent vector usually sampled from a continuous distribution like the Gaussian by a sequence of invertible functions to produce the sample. Hence even though the latent vector distribution is simple, the sample distribution could be highly complex, provided we are using an expressive set of invertible transformations. Also, the invertibility of the model implies that one can find the exact latent vector corresponding to an example from a dataset. All other approaches to generative modelling can compute the latent vector, for example, only approximately.

The ability of a normalizing flow based model to express complex real-world data distributions depends on the expressive power of the invertible transformations used. In supervised models in vision tasks, complex, multilayered CNNs with different window sizes are used. CNN's with larger window size helps in spatial mixing of information about the images, resulting in expressive features. Glow used invertible 1$\times$1 convolutions to build normalizing flow models \cite{glow}. For a 1$\times$1 convolution (if it is invertible), the inverse is also a 1$\times$1 convolution. We show that this approach does not generalize to larger window sizes. In particular, the inverse of an invertible 3$\times$3 convolution necessarily depends on all the feature vector dimensions, unlike CNNs, which only require local features.

Emerging convolutions proposed a way of inverting convolutions with large window sizes \cite{emerging}. The inverse is not a convolution and is computed by a linear equation system that can efficiently be solved using back substitution. However, for obtaining an invertible convolution, they required 2 CNN filters to be applied. Hence for every effective invertible convolution, they are required to do two convolutions back to back. We propose a simple approach using padding of obtaining invertible convolutions, which only uses a single convolutional filter. Furthermore, we are able to give a characterization (necessary and sufficient conditions) for the convolutions to be invertible. This allows us to optimize over the space of invertible convolutions during training directly. 

% Quad Coupling mechanism
%Experimental validation. Quality versus run-time concerning emergent.

\paragraph{Main Contributions.}
\begin{itemize}
    % \item We show that the inverse of a 3$\times$3 convolutions if it exists, need not be a 3$\times$3 convolution (see Section \ref{}).
    \item We give necessary and sufficient conditions for a 3$\times$3 convolution to be invertible by making some modifications to the padding (see Section \ref{sec:inv-conv}).
    \item We also propose a more expressive coupling mechanism called quad coupling (see Section \ref{sec:quad-coupling}).
    \item We use our characterization and quad-coupling to train flow-based models that give samples of similar quality as previous works while improving upon the run-time compared to the other invertible 3$\times$3 convolutions proposed (see Section \ref{sec:results}).
\end{itemize}

%%%%%%%%%%%
%%%%%%%%%%%
\section{Related works}\label{sec:related_work}
\paragraph{Normalizing flows.}
A normalizing flow aims to model an unknown data distribution (\cite{glow,cubic_spline_flow,neural_spline_flow}), that is, to be able to sample from this distribution and estimate the likelihood of an element for this distribution.

To model the probability density of a random variable $x$, a normalizing flow apply an invertible change of variable $x=g_\theta(z)$ where $z$ is a random variable following a known distribution for instance $z\sim \mathcal{N}(0,I_d)$. Then we can get the probability of $x$ by applying the change of variable formula
\[  p_\theta\left(x\right)= p\left(f_\theta(x)\right)\left(\left|\frac{\partial f_\theta\left(x\right)}{\partial x^T}\right| \right) \]
where $f_\theta$ denotes the inverse of $g_\theta$ and $\left|\frac{\partial f_\theta\left(x\right)}{\partial x^T}\right|$ its Jacobian.

The parameters $\theta$ are learned by maximizing the actual likelihood of the dataset. At the same time, the model is designed so that the function $g_\theta$ can be inverted and have its jacobian computed in a reasonable amount of time.

\paragraph{Glow.}
RealNVP defines a normalizing flow composed of a succession of invertible steps (\cite{realNVP}). Each of these steps can be decomposed into layers steps. Improvements for some of these layers where proposed in later articles (\cite{glow}, \cite{emerging}).

\emph{Actnorm}: The actnorm layer performs an affine transformation similar to batch normalization. First, its weights are initialized so that the output of the actnorm layer has zero mean and unit variance. Then its parameters are learned without any constraint.

\emph{Permutation}: RealNVP proposed to use a fixed permutation to shuffle the channels as the coupling layer only acts on half of the channels. Later, \cite{glow} replaced this permutation with a 1$\times$1 convolution in Glow. These can easily be inverted by inverting the kernel. Finally, \cite{emerging} replaced this 1x1 convolution with the so-called emerging convolution. These have the same receptive convolution with a kernel of arbitrary size. However, they are computed by convolving with two successive convolutions whose kernel is masked to help the inversion operation.

\emph{Coupling layer}: The coupling layer is used to provide flexibility to the architecture. The Feistel scheme (\cite{feistelGeneralized}) inspires its design. They are used to build an invertible layer out of any given function $f$. Here $f$ is learn as a convolutional neural network. 
\[ y = [y_1, y_2], \quad y_1=x1, \quad y_2=(x_2 + f(x_1))*\exp(g(x_1)) \]
Where we get $x_1$ and $x_2$ by splitting the input $x$ along the channel axis.

\paragraph{Invertible Convolutional Networks.}
Complementary to normalizing flows, there has been some work done designing more flexible invertible networks. For example, \cite{revnet} proposed reversible residual networks (RevNet) to limit the memory overhead of backpropagation, while (\cite{JacobsenBZB19}) built modifications to allow an explicit form of the inverse, also studying the ill-conditioning of the local inverse. \cite{Flow++} proposed a flow-based model that is the non-autoregressive model for unconditional density estimation on standard image benchmarks

\emph{Invertible 1$\times$1 Convolution:}
\cite{glow} proposed the invertible 1$\times$1 convolution replacing fixed permutation (\cite{realNVP}) that reverses the ordering of the channels. \cite{emerging} proposed normalizing flow method to do the inversion of 1$\times$1 convolution with doing some padding on the kernel and two distinct auto-regressive convolutions, which also provide a stable and flexible parameterization for invertible 1$\times$1 convolutions.  

\emph{Invertible n$\times$n Convolution:} Reformulating n$\times$n convolution using invertible shift function proposed by \cite{glow_nxn} to decrease the number of parameters and remove the additional computational cost while keeping the range of the receptive fields. In our proposed method, there is no need for the reformulation of standard convolutions. \cite{emerging} proposed two different methods to produce the invertible convolutions : (1) Emerging Convolution and (2) Invertible Periodic Convolutions. Emerging requires two autoregressive convolutions to do a standard convolution, but our method requires only one convolution as compare to the method proposed by \cite{emerging} and increase the flexibility of the invertible n$\times$n convolution. 
\section{Our approach}\label{sec:our_approach}
We propose a novel approach for constructing invertible 3$\times$3 convolutions and coupling layers for normalizing flows. We propose two modifications to the existing layers used in previous normalizing flow models:
\begin{itemize}
	\item convolution layer: instead of using 1$\times$1 convolutions or emerging convolutions, we propose to use standard convolutions with a kernel of any size with a specific padding. 
	\item coupling layer: we propose to use a modified version of the coupling layer designed to have a bigger receptive field.
\end{itemize}
We also show how invertibility can be used to manipulate images semantically.
% \paragraph{Invertible Transformations:}
% Given an invertible function $f:X\to Y$, and a probability distribution $D$ on the input space $X$, the probability distribution of the random variable $f(X)$ (where $X$ is sampled from $D$) is given by:
% \[  p\left(f(x)\right)= p\left(x\right) \left|\frac{\partial f\left(x\right)}{\partial x^T}\right| ^{-1} \]
% where  $\left|\frac{\partial f\left(x\right)}{\partial x^T}\right|$ is the jacobian of f.

\subsection{Invertible $3\times3$ Convolution}
\label{sec:inv-conv}
We give necessary and sufficient conditions for an arbitrary convolution with some simple modifications on the padding to be invertible. Moreover, the inverse can also be computed by an efficient back substitution algorithm.

\begin{definition}[Convolution]
The convolution of an input $X$ with shape $H\times W \times C$ with a kernel $K$  with shape $k\times k\times C\times C$ is $Y=X*K$ of shape $(H-k+1)\times (W-k+1)\times C$ which is equal to 
\begin{equation}
   Y_{i,j,c_o} = \sum_{l,h < k}\sum_{c_i=1}^{C}I_{i+l,j+k,c_i}K_{l,k,c_i,c_o}
\end{equation}.
\end{definition}

In this setting, the output $Y$ has a smaller size than the input to prevent this input is padded before applying the convolution.
\begin{definition}[Padding]
Given an image $I$ with shape $H\times W \times C$, the $(t,b,l,r)$ padding of $I$ is the image $\hat{I}$ of shape $(H + t + b)\times (W+l+r) \times C$ defined as
    \begin{equation}
        \hat{I}_{i,j,c} = \begin{cases}
        I_{i-t,j-l,c}\ &\text{if } i-t < H\text{ and } j-l < W\\
        0&\text{otherwise}
        \end{cases}
    \end{equation}
\end{definition}
% \begin{figure}[ht!] 
%     \centering
%     \includegraphics[width=0.95\linewidth]{images/kernal_matrix.png} 
%     \caption{Different types of padding : on the left the standard padding (the input is in green and the white part is added zeros) that give a matrix similar to Figure \ref{fig:matrix}(a); on the right an alternative padding scheme that results in a block triangular matrix $M$ (see Figure \ref{fig:matrix}(b)).}
%     \label{fig:padding}
% \end{figure}
\begin{figure*}[ht]
	\centering
		\includegraphics[width=0.89\linewidth]{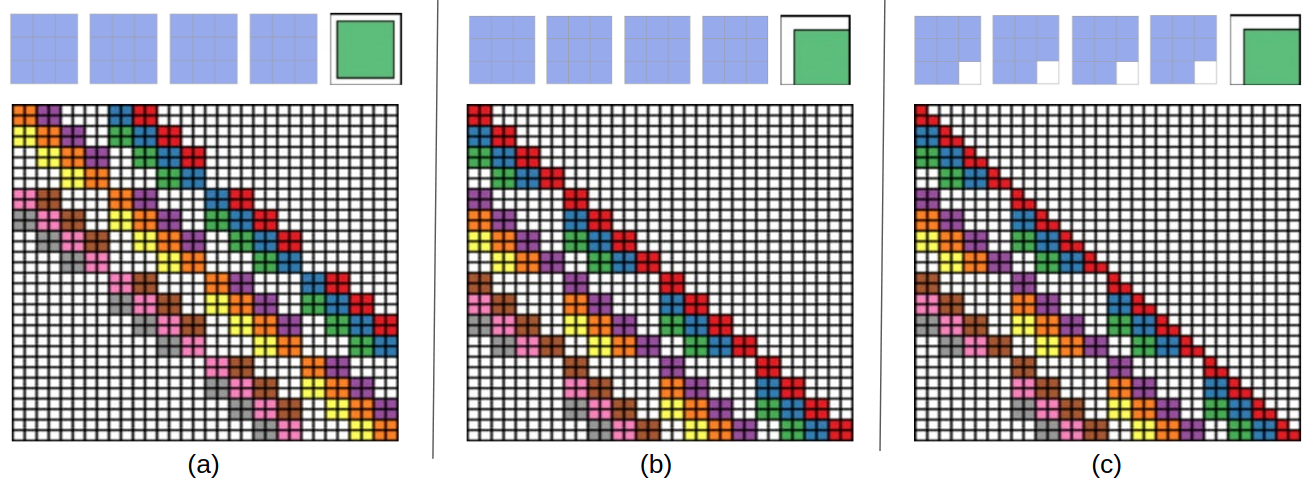}	
	\caption{(a).Top: first four is kernel matrix and fifth is input matrix with the \emph{standard} padding that give the bottom convolution matrix,  Bottom: the convolution matrix corresponding to a convolution with kernel of size 3 applied to an input of size $4\times4$ padded on both sides and with 2 channels. Zero coefficients are drawn in white, other coefficient are drawn using the same color if they are applied to the same spatial location albeit on different channels. (b) Top: an \emph{alternative} padding scheme that results in a block triangular matrix $M$, Bottom: The matrix corresponding to a convolution with kernel of size 3 applied to an input of size $4\times4$ padded only on one side and with 2 channels. (c) Top: an \emph{masked alternative} padding scheme that results in a  triangular matrix $M$, Bottom: the matrix corresponding to a convolution with kernel of size 3 applied to an input of size $4\times4$ padded only on one side and with 2 channel. One of the weight of the kernel is masked. Note that the equivalent matrix $M$ is \emph{triangular}.}
	\label{fig:matrix}
\end{figure*}
As zero padding does not add any bias to the input, the convolution between a padded input $\hat{I}$ and a kernel $K$ is still a linear map between the input and the output. As such, it can be described as matrix multiplication.

An image $I$ of shape $(H,W,C)$ can be seen as an vector $\vec{I}$ of $\mathbb{R}^{H\times W\times C}$. In the rest of this paper we will always use the following basis %$\vec{I}_i=I_{(i \div (CW))\mod H, (i \div C) \mod W ,i \mod C}$
$I_{i,j,c} = \vec{I}_{c + Cj + CHi}$.  For any index $i\leq HWC$, let $(i_y,i_x,i_c)$ denote the indexes that satisfy $\vec{I}_i=I_{i_y,i_x,i_c}$. Note that $i < j$ iff $(i_y,i_x,i_c) \prec (j_y,j_x,j_c)$ where $\prec$ denotes the lexicographical order over $\mathbb{R}^3$. If $C=1$ this means that the pixel $(j_y,j_x)$ is on the right or below the pixel  $(i_y,i_x)$.

\begin{definition}[Matrix of a convolution.]
Let $K$ be a kernel of shape $k\times k \times C \times C$. The matrix of a convolution of kernel $K$ with input $X$ of size $H\times W\times C$ with padding $(t,b,l,r)$ is a matrix describing the linear map $X\mapsto \hat{X}*K$.
\end{definition}

\paragraph{Characterization of invertible convolutions:}
We consider convolution with top and left padding only. For such convolutions, we give necessary and sufficient conditions for it to be invertible. 
Let $K$ be the kernel of the convolution with shape $3\times 3 \times N \times N$ where $3\times3$ is the window size, and $N$ is the number of channels. Note that number of input channels should be equal to the number of output channels for it to be invertible.
%% Lemma-1
\begin{lemma}\label{lem:lower-triang}
\textbf{$M$} is a lower triangular matrix with all diagonal entries $=K_{3,3}$
\end{lemma}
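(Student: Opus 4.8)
The plan is to write the convolution as an explicit matrix $M$ in the vectorized basis $\vec{I}_{c+Cj+CHi}=I_{i,j,c}$ fixed in the excerpt, and then read off its sparsity pattern directly from the support of a $3\times 3$ kernel under top-left padding. Since the stated correspondence makes the vector index order coincide with the lexicographic order on $(\text{row},\text{column},\text{channel})$, with the row most significant, proving that $M$ is lower triangular amounts to showing that the coefficient of input pixel $(i',j',c_i)$ in output pixel $(i,j,c_o)$ vanishes whenever $(i',j',c_i)\succ(i,j,c_o)$.

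First I would unfold the two definitions. For a $3\times 3$ kernel, preserving the spatial size (needed for $M$ to even be square) forces top-left padding $(t,l)=(2,2)$ and $b=r=0$, so $\hat{I}_{p,q,c}=I_{p-2,q-2,c}$ when the shifted index is in range and $0$ otherwise. Substituting this into $Y_{i,j,c_o}=\sum_{l',h'<3}\sum_{c_i}\hat{I}_{i+l',j+h',c_i}K_{l',h',c_i,c_o}$ and renaming $i'=i+l'-2$, $j'=j+h'-2$, I would obtain that the matrix entry equals $K_{i'-i+2,\,j'-j+2,\,c_i,c_o}$ exactly when $i-2\le i'\le i$ and $j-2\le j'\le j$, and is $0$ otherwise. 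The out-of-range (padded) cases only delete terms near the top/left boundary and never create new ones, so they are absorbed into the ``otherwise'' branch.

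With this formula in hand, lower-triangularity is a short case analysis on the lexicographic comparison. A nonzero entry forces $i'\le i$ and $j'\le j$, hence $(i',j')\preceq (i,j)$ spatially; when $(i',j')\prec(i,j)$ the entry sits strictly below the block diagonal, and when $(i',j')=(i,j)$ we are on the block diagonal with $l'=h'=2$, so the contribution is the corner weight $K_{3,3}$ read across channels. Thus every nonzero entry lies on or below the diagonal block, and each diagonal block is exactly the $N\times N$ matrix $K_{3,3}$. For a single channel this is literally a lower-triangular $M$ with constant diagonal $K_{3,3}$; for $N>1$ it is block lower triangular with diagonal blocks $K_{3,3}$, and full (entrywise) triangularity is obtained only after masking the corner weight so that $K_{3,3}$ itself is triangular, which is the ``masked alternative'' padding of Figure~\ref{fig:matrix}(c).

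The routine parts are the two index substitutions; the step that needs care is the orientation bookkeeping. I must check that the direction in which the kernel window is summed, combined with padding placed on the top and left only, sends every output pixel's dependence to input positions that are lexicographically no larger — giving lower rather than upper triangular — and that the boundary (padded) rows and columns only kill terms. The other point to state cleanly is the block-versus-entrywise distinction above, so that the diagonal claim is read as diagonal blocks equal to $K_{3,3}$, with the literal ``lower triangular'' statement recovered in the single-channel or masked-corner case.
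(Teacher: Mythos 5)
Your proposal is correct and takes essentially the same route as the paper's own proof: both read off the sparsity pattern of $M$ directly from the top-left-padded convolution formula, showing that a nonzero entry forces the input pixel to be lexicographically no larger than the output pixel, with the diagonal contribution given by the corner weight $K_{3,3}$. If anything, your version is more careful than the paper's, which argues only at the pixel level and leaves implicit the distinction you spell out — that for $N>1$ channels $M$ is only \emph{block} lower triangular with diagonal blocks $K_{3,3}$, and entrywise triangularity requires the kernel masking of Figure~\ref{fig:matrix}(c).
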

Where the matrix $M$ is which produces the equivalent result when multiplied with a vectorized input ($\hat{x}$).

%% Proof-1
\begin{proof}
Consider any entry in the upper right half of $M$. That is $(i,j)$ such that $i < j$ according to the ordering given in the definition of $M$. $M_{i,j}$ is nothing but the scalar weight that needs to be multiplied to the $j$th pixel of input when computing $i$th pixel of the output. The linear equation relating these two variable is as follows:
$$ y_i = \sum_{l=0}^3\sum_{k=0}^3K_{3-l,3-k}x_{i_x-l,i_y-k} $$
From this equations follows that if $j_x>i_x$ or $j_y>i_y$ then the $i$th pixel of the output does not depend on the $j$th pixel of the input and thus $M_{i,j}=0$. This also justifies that all diagonal coefficients of $M$ are equal to $K_{3,3}$
\end{proof}

We first describe our conditions for the case when $N=1$. We prove the following theorem.
\begin{theorem}[Characterization for $N=1$] \label{thm:inv-conv-1}
$$M \text{ is invertible iff } K_{3,3} \neq 0.$$
\end{theorem}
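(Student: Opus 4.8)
The plan is to lean directly on Lemma~\ref{lem:lower-triang}, which already does all the structural work: it establishes that $M$ is lower triangular with every diagonal entry equal to $K_{3,3}$. Once this is in hand, the theorem is essentially a one-line consequence of the standard fact that a triangular matrix is invertible precisely when its diagonal entries are all nonzero.

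First I would pin down the dimensions so that invertibility is even meaningful. Since $N=1$ there is a single channel, the padding is chosen (top and left only) so that the output has the same spatial size $H\times W$ as the input, and hence a vectorized input $\hat{x}$ lives in $\mathbb{R}^{HW}$ with $M$ a square $HW\times HW$ matrix. Then I would compute the determinant: by Lemma~\ref{lem:lower-triang}, $M$ is lower triangular, so $\det(M)$ equals the product of its diagonal entries, and since each of the $HW$ diagonal entries equals $K_{3,3}$, we obtain $\det(M)=K_{3,3}^{HW}$.

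To conclude I would invoke the determinant criterion for invertibility. The matrix $M$ is invertible if and only if $\det(M)\neq 0$, that is, if and only if $K_{3,3}^{HW}\neq 0$. Because $HW\geq 1$, this power vanishes exactly when its base does, giving the claimed equivalence. Concretely, the reverse direction ($K_{3,3}\neq 0 \Rightarrow M$ invertible) holds since a nonzero base gives a nonzero power, and the forward direction follows by contraposition: if $K_{3,3}=0$ then every diagonal entry is zero, forcing $\det(M)=0$ and hence singularity.

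I do not anticipate any genuine obstacle, since the essential content is already packaged in the lemma. The only points worth stating with care are (i) that the chosen padding makes $M$ square of size $HW$, so that ``invertible'' is well-defined and the determinant is a true positive power $K_{3,3}^{HW}$, and (ii) that triangularity is precisely what collapses the question of invertibility of the whole $HW\times HW$ matrix down to the single scalar condition $K_{3,3}\neq 0$.
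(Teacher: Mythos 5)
Your proof is correct and follows exactly the same route as the paper's: invoke Lemma~\ref{lem:lower-triang}, compute $\det(M)=K_{3,3}^{HW}$ as the product of the diagonal entries of a lower triangular matrix, and apply the determinant criterion for invertibility. You are in fact slightly more careful than the paper, which leaves the final ``$\det(M)\neq 0$ iff $K_{3,3}\neq 0$'' step and the squareness of $M$ implicit.
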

\begin{proof}
The proof of the theorem uses Lemma \ref{lem:lower-triang}.  Since $M$ is lower triangular, the determinant is nothing but the product of diagonal entries, which is $=K_{3,3}^{h*w}$ where $h,w$ is the dimensions of the input/output image. 
\end{proof}

% We also show that Theorem \ref{thm:inv-conv-1}  can be generalized to convolutions on arbitrary channels and window size. 
% The can be found in the supplementary material.

% \begin{theorem}[Characterization of Invertible Convolutions]
% $$M \text{ is invertible iff } \text{det}(K_{3,3}) \neq 0.$$
% \end{theorem}

At its core, the convolution layer is a linear operation. However, we have no guarantees regarding it as invertibility. The result $z$ of the convolution of input $x$ with kernel $k$ can be expressed as the product as $x$ with a matrix $M$. When zero-padding is used around the input so that $x$ and $z$ have the same shape, the matrix $M$ is not easily invertible because the determinant of $M$ can be zero (see matrix $M$ in Figure \ref{fig:matrix}(a)).

However, when padding only on two sides (left and top), the corresponding $M$ is blocked triangular (see Figure \ref{fig:matrix}(b)). To further ensure invertibility and speed up the inversion process, we also mask part of the kernel so that the matrix corresponding to the convolution is triangular, see in Figure \ref{fig:matrix}(c). In this configuration, the jacobian of the convolution can also be easily computed.

% \begin{figure}[h]
% 	\begin{center}
% 			\includegraphics{}
% 	\end{center}
% 	\caption{The matrix corresponding to a convolution with kernel of size 3 applied to an input of size $4\times4$ padded only on one side and with 2 channels.}
% 	\label{fig:pad_conv}
% \end{figure}
% \begin{figure}[h]
% 	\begin{center}
% 		\includegraphics{}
% 	\end{center}
% 	\caption{The matrix corresponding to a convolution with kernel of size 3 applied to an input of size $4\times4$ padded only on one side and with 2 channel. One of the weight of the kernel is masked. Note that the equivalent matrix M is \emph{triangular}.}
% 	\label{fig:triang_conv}
% \end{figure}

\subsection{Quad-coupling}
\label{sec:quad-coupling}
The coupling layer is used to have some flexibility as its functions can be of any form. However, it only combines the effects of half channels. 
To overcome this issue we designed a new coupling layer inspired from generalized Feistel (\cite{feistelGeneralized}) schemes. Instead of dividing the input $x$ into two blocks we divide it into four $x = \left[x_1,x_2,x_3,x_4\right]$ along the feature axis. Then we keep $x_1$ unchanged  and use it to modify the other blocks in an autoregressive  manner (see Figure \ref{fig:coupling}):
\begin{align}
y_1 =& x_1\\
y_2 =& (x_2 + f_1(x_1)) * \exp(g_1(x_1))\\
y_3 =& (x_3 + f_2(x_1,x_2)) * \exp(g_2(x_1,x_2))\\
y_4 =& (x_4 + f_3(x_1,x_2,x_3)) * \exp(g_3(x_1,x_2,x_3))
\end{align}
where $(f_i)_{i\leq 3}$ and $(g_i)_{i\leq 3}$ are learned. The output of the layer is obtained by concatenating the $(y_i)_{i\leq 4}$.
\begin{figure}[ht!]
\centering
\includegraphics[width=0.65\linewidth]{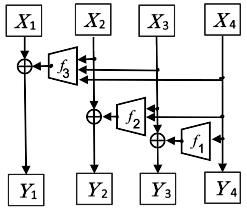}
   \caption{The quad-coupling layer, each input block $X_i$ has the same spatial dimension as the input $X$ but only one quarter of the channels. Each of the function $f_1$, $f_2$ and $f_3$ is a 3 layer convolutional network. $\bigoplus$ symbolizes a component-wise addition. The multiplicative actions are not represented here.}
	\label{fig:coupling}

\end{figure}
\begin{figure}[ht!]
\centering
\includegraphics[width=0.85\linewidth]{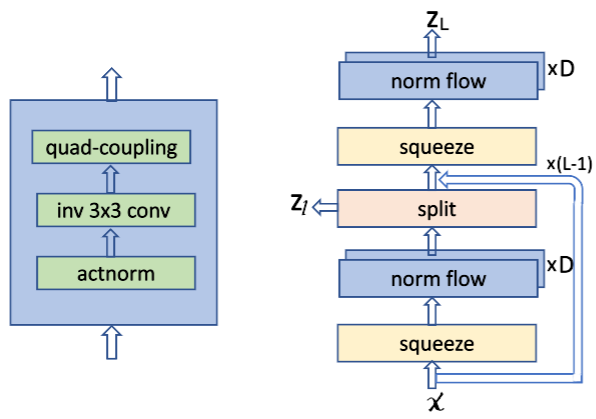}
   \caption{Overview of the model architecture. Left, the flow modules we propose: containing inv $3\times3$ convolution. The diagram on the right shows the entire model architecture, where the flow module is now grouped. The squeeze module reorders pixels by reducing the spatial dimensions by a half, and increasing the channel depth by four. A hierarchical prior is placed on part of the intermediate representation using the split module as in (\cite{glow}). $x$ and $z$ denote input and output. The model has L levels, and D flow modules per level.}
	\label{fig:norm_flow}
\end{figure}

\section{Experimental results}\label{sec:results}
The architecture is based on \cite{emerging}. We modified the emerging convolution layer to use our standard convolution. We also introduced the quad-coupling layer in place of the affine coupling layer. Finally, we evaluate the model on a variety of models and provide images sampled from the model. For detailed overview of the architecture see Figure \ref{fig:norm_flow}.

\paragraph{Training setting:} To train the model on Cifar10, we used the 3 level (L) and depth (D) of 32 and lr $0.001$ for the 500 epochs. To train on ImageNet32, $L = 3$, $D = 48$, lr $0.001$ for the 600 epochs and for ImageNet64, $L = 4$, $D = 48$, lr $0.001$ for the 1000 epochs. See Figure \ref{fig:norm_flow} for the model architecture.
\paragraph{Quantitative results:}
The Comparison of the performance of  $3\times3$ invertible convolution with the emerging convolution (\cite{emerging}) for the cifar10 dataset in Table \ref{tab:cifar}. The performance of our layers was tested on CIFAR10 (\cite{cifar10}), ImageNet (\cite{imagenet}) as well as on the galaxy dataset (\cite{galaxy}) see Table \ref{tab:performance}. We also tested our architecture on networks with a smaller depth ($D=4$ or $D=8$) see Table \ref{tab:smallnet} which could be used when computational resources are limited as their sampling time is much lower. In this case, using standard convolution and quad-coupling offers a more considerable performance improvement than with bigger models (see Table \ref{tab:smallnet}). 

\begin{table}[ht!]
    \centering
    \begin{tabular}{|c|c|c|}
    		\hline
    		 &  Emerging 3$\times$3 Inv. conv & Our 3$\times$3 Inv. conv \\
    		\hline
    		Affine & 3.3851 & 3.4209 \\
    		\hline
    		Quad & \textbf{3.3612} & \textbf{3.3879} \\
    		\hline
    	\end{tabular} 
    	\caption{Comparison of the performance (in bits per dimension) achieved on the Cifar10 dataset with different coupling architectures.}
    	\label{tab:cifar}
\end{table}    

\begin{table}[ht!]
    \centering
     \begin{tabular}{|c|c|c|c|c|}
    		\hline
    		& Glow & Emerging & $3\times 3$ & Quad\\
    		\hline 
    		Cifar10 & 3.35 & 3.34 & \textbf{3.3498} & \textbf{3.3471} \\ 
    		\hline
    		ImageNet32 & 4.09 & 4.09 & \textbf{4.0140} & 4.0377 \\ 
    		\hline
    		ImageNet64 & 3.81 & 3.81 & 3.8946 & 3.8514 \\ 
    		\hline
    		Galaxy & --- & 2.2722 & 2.2739 & \textbf{2.2591} \\ 
    		\hline 
    	\end{tabular} 
    	\caption{Performance achieved on the Cifar10 and Imagenet datasets after a limited number of epochs (500 for Cifar10, 600 for ImageNet32,ImageNet64 and 1000 for Galaxy). Emerging results were obtained by using the code provided in \cite{emerging}, $3\times 3$ is replacing the emerging convolutions by our $3\times 3$ invertible convolutions and quad uses quad-coupling on top of this.}
    	\label{tab:performance}
\end{table}

\begin{figure}[ht!]
\centering
		\includegraphics[width=0.8\linewidth]{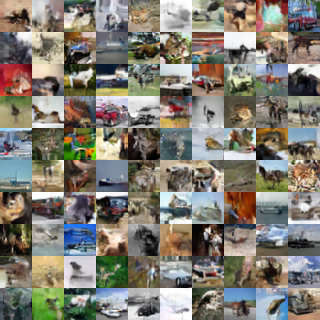}

	\caption{Sample images generated after training on the cifar dataset.}
	\label{fig:sample_cifar}
\end{figure}

% \begin{figure}[ht]
% 	\begin{center}
% 		\includegraphics[width=0.8\linewidth]{}
% 	\end{center}
% 	\caption{We can see the result of gradually interpolation of the more than two \emph{latent vector} \textbf{($z$)}. Ratio to which \emph{latent vector} \textbf{($z$)} is used to regenerate interpolated image is linear left to right.}
% 	\label{fig:interp_grid}
% \end{figure}

\begin{table*}
\centering
	\begin{tabular}{|c|c|c|c|c|c|}
		\hline  
		\multirow{2}{*}{Dataset} & \multicolumn{2}{c|}{Emerging} & \multicolumn{2}{c|}{Ours} & \multirow{2}{*}{Depth}  \\
		       & Performance & Sampling time & Performance & Sampling time &   \\ 
		\hline
		Cifar10 &  3.52 & \multirow{2}{*}{2.45} & \textbf{3.49} & \multirow{2}{*}{\textbf{1.31}} & \multirow{2}{*}{4} \\ 
		\cline{1-2} \cline{4-4}
		{Imagenet32} & 4.30 &  & \textbf{4.25} &  & \\ 
		\hline 
		Cifar10 &  3.47 & \multirow{2}{*}{4.94} & \textbf{3.46} & \multirow{2}{*}{\textbf{2.76}} & \multirow{2}{*}{8} \\ 
		\cline{1-2} \cline{4-4}
		Imagenet32 & 4.20 &   & \textbf{4.18} &  & \\ 
		\hline 
	\end{tabular}
	\vspace{-1em}
	\caption{Performance with smaller networks, when computational resources are limited. The performance is expressed in bits per dimension and the sampling time is the time in seconds needed to sample 100 images. All networks were trained for 600 epochs.}
	\label{tab:smallnet}
\end{table*}

\begin{figure*}[ht]
\centering
	\includegraphics[width=0.89\textwidth]{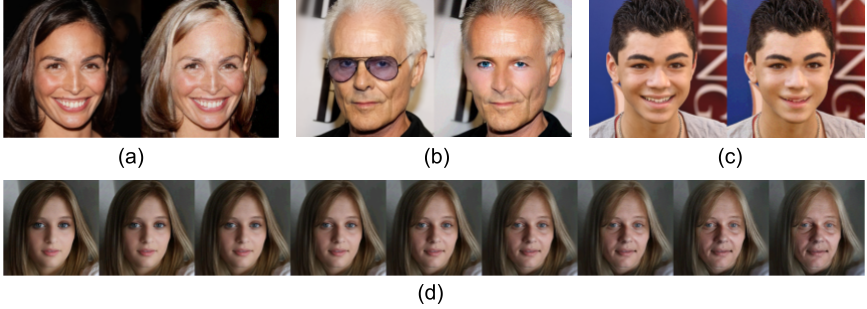}
	\vspace{-1em}
	\caption{From left to right : the result obtained when using the network to change hair colour (a), remove glasses (b), and visage shape (c). For every example, the original image is shown on the left. Fig.(d) here, we can see the result of gradually modifying the age parameter. The original image is the fourth from the left (middle one).}
	
	\label{fig:midif}
	
\end{figure*}

\paragraph{Sampling Times:}
We compared our method's sampling time (Table \ref{tab:samplingtimes}) against Glow (\cite{glow}) and Emerging Convolutions (\cite{emerging}). Our convolution still requires solving a sequential problem to be inverted and, as such, need to be inverted on CPU, unlike Glow that can be inverted while performing all the computation on  GPU. This explains the gap between the sampling time of our model compared to Glow. However, it is still roughly two times faster than emerging convolutions; this comes from the need to solve two inversion problems to invert one emerging convolution layer. The quad-coupling layer does not affect sampling time too much.
\begin{table}[ht!]
\centering
	\begin{tabular}{|c|c|c|c|c|}
		\hline 
		& Glow & Emerging & $3\times 3$  & Quad \\ 
		\hline 
		Cifar10 & 0.58 & 18.4 & 9.3 & 10.8 \\ 
		\hline 
		Imagenet32 & 0.86 & 27.6 & 14.015 & 16.1 \\ 
		\hline
		Imagenet64 & 0.50 & 160.72 & 82.04 & 84.06 \\ 
		\hline
	\end{tabular}
	\vspace{-0.5em}
	\caption{Time in seconds to sample 100 images. Results were obtained with Glow running on GPU and the other methods running on one CPU core.}
	\label{tab:samplingtimes}
\end{table}

\paragraph{Interpretability results:}
To show the interpretability of our invertible network, we used the Celeba dataset (\cite{celeba}) which provides images of faces and attributes corresponding to these faces. In Figure \ref{fig:sample_cifar} are the randomly generated fake sample images for the cifar10 dataset. The covariance matrix between the attributes of images in the dataset and their latent representation indicates how to modify the latent representation of an image to add or remove features. Examples of such modifications can be seen in Figures \ref{fig:midif}(a, b, c) and \ref{fig:midif}(d).

% \begin{align}
% z = \alpha_1 * z_1 + \alpha_2 * z_2 + \alpha_3 * z_3 + \alpha_3 * z_4
% \end{align}
% where $z_1$, $z_2$, $z_3$, $z_4$, are the \emph{latent vector} \textbf{($z$)} of corner images and $\alpha_1$, $\alpha_2$, $\alpha_3$, $\alpha_4$ $\in$ ($0$, $0.15$, $0.85$, $1$) are the fractions.   
\section{Conclusion}
In this paper, we propose a new method for Invertible n$\times$n Convolution. Coupling layers solve two problems for normalizing flows: they have a tractable Jacobian determinant and can be inverted in a single pass. We propose a new type of coupling method, Quad-coupling. Our method shows consistent improvement over the Emerging convolutions method, and we only need a single CNN layer for every effective invertible convolution. This paper shows that we can invert a convolution with only one effective convolution, and additionally, the inference time and sampling time improved notably. We show the inversion of 3$\times$3 convolution and the generalization of the inversion for the n$\times$n kernel.  Furthermore, we demonstrate improved quantitative performance in terms of log-likelihood on standard image modelling benchmarks.
% \newpage
% \clearpage
% \begin{contributions} % will be removed in pdf for initial submission,
%                       % so you can already fill it to test with the
%                       % ‘accepted’ class option
%     Briefly list author contributions.
%     This is an excellent way of making clear who did what and giving proper credit.

%     H.~Q.~Bovik conceived the idea and wrote the paper.
%     Coauthor One created the code.
%     Coauthor Two created the figures.
% \end{contributions}

% \begin{acknowledgements} % will be removed in pdf for initial submission,
%                          % so you can already fill it to test with the
%                          % ‘accepted’ class option
%     Briefly acknowledge people and organizations here.

%     \emph{All} acknowledgements go in this section.
% \end{acknowledgements}

\bibliography{uai2021-template}

\begin{thebibliography}{14}
\providecommand{\natexlab}[1]{#1}
\providecommand{\url}[1]{\texttt{#1}}
\expandafter\ifx\csname urlstyle\endcsname\relax
  \providecommand{\doi}[1]{doi: #1}\else
  \providecommand{\doi}{doi: \begingroup \urlstyle{rm}\Url}\fi

\bibitem[Ackermann et~al.(2018)Ackermann, Schawinski, Zhang, Weigel, and
  Turp]{galaxy}
Sandro Ackermann, Kevin Schawinski, Ce~Zhang, Anna~K Weigel, and M~Dennis Turp.
\newblock Using transfer learning to detect galaxy mergers.
\newblock \emph{Monthly Notices of the Royal Astronomical Society},
  479\penalty0 (1):\penalty0 415--425, 2018.

\bibitem[Dinh et~al.(2017)Dinh, Sohl{-}Dickstein, and Bengio]{realNVP}
Laurent Dinh, Jascha Sohl{-}Dickstein, and Samy Bengio.
\newblock Density estimation using real {NVP}.
\newblock In \emph{5th International Conference on Learning Representations,
  {ICLR} 2017, Toulon, France, April 24-26, 2017, Conference Track
  Proceedings}. OpenReview.net, 2017.

\bibitem[Durkan et~al.(2019{\natexlab{a}})Durkan, Bekasov, Murray, and
  Papamakarios]{cubic_spline_flow}
Conor Durkan, Artur Bekasov, Iain Murray, and George Papamakarios.
\newblock Cubic-spline flows.
\newblock \emph{arXiv preprint arXiv:1906.02145}, 2019{\natexlab{a}}.

\bibitem[Durkan et~al.(2019{\natexlab{b}})Durkan, Bekasov, Murray, and
  Papamakarios]{neural_spline_flow}
Conor Durkan, Artur Bekasov, Iain Murray, and George Papamakarios.
\newblock Neural spline flows.
\newblock \emph{arXiv preprint arXiv:1906.04032}, 2019{\natexlab{b}}.

\bibitem[Gomez et~al.(2017)Gomez, Ren, Urtasun, and Grosse]{revnet}
Aidan~N Gomez, Mengye Ren, Raquel Urtasun, and Roger~B Grosse.
\newblock The reversible residual network: Backpropagation without storing
  activations.
\newblock In \emph{Advances in Neural Information Processing Systems},
  volume~30. Curran Associates, Inc., 2017.

\bibitem[Ho et~al.(2019)Ho, Chen, Srinivas, Duan, and Abbeel]{Flow++}
Jonathan Ho, Xi~Chen, Aravind Srinivas, Yan Duan, and Pieter Abbeel.
\newblock Flow++: Improving flow-based generative models with variational
  dequantization and architecture design.
\newblock In \emph{Proceedings of the 36th International Conference on Machine
  Learning}, volume~97, pages 2722--2730. PMLR, 09--15 Jun 2019.

\bibitem[Hoang and Rogaway(2010)]{feistelGeneralized}
Viet~Tung Hoang and Phillip Rogaway.
\newblock On generalized feistel networks.
\newblock In \emph{Annual Cryptology Conference}, pages 613--630. Springer,
  2010.

\bibitem[Hoogeboom et~al.(2019)Hoogeboom, Berg, and Welling]{emerging}
Emiel Hoogeboom, Rianne van~den Berg, and Max Welling.
\newblock Emerging convolutions for generative normalizing flows.
\newblock \emph{arXiv preprint arXiv:1901.11137}, 2019.

\bibitem[Jacobsen et~al.(2019)Jacobsen, Behrmann, Zemel, and
  Bethge]{JacobsenBZB19}
J{\"{o}}rn{-}Henrik Jacobsen, Jens Behrmann, Richard~S. Zemel, and Matthias
  Bethge.
\newblock Excessive invariance causes adversarial vulnerability.
\newblock In \emph{7th International Conference on Learning Representations,
  {ICLR} 2019, New Orleans, LA, USA, May 6-9, 2019}. OpenReview.net, 2019.

\bibitem[Kingma and Dhariwal(2018)]{glow}
Durk~P Kingma and Prafulla Dhariwal.
\newblock Glow: Generative flow with invertible 1x1 convolutions.
\newblock In \emph{Advances in Neural Information Processing Systems}, pages
  10215--10224, 2018.

\bibitem[Krizhevsky et~al.(2009)Krizhevsky, Hinton, et~al.]{cifar10}
Alex Krizhevsky, Geoffrey Hinton, et~al.
\newblock Learning multiple layers of features from tiny images.
\newblock volume~1, page~7. In Technical report, 2009.

\bibitem[Liu et~al.(2015)Liu, Luo, Wang, and Tang]{celeba}
Ziwei Liu, Ping Luo, Xiaogang Wang, and Xiaoou Tang.
\newblock Deep learning face attributes in the wild.
\newblock In \emph{Proceedings of International Conference on Computer Vision
  (ICCV)}, December 2015.

\bibitem[Russakovsky et~al.(2015)Russakovsky, Deng, Su, Krause, Satheesh, Ma,
  Huang, Karpathy, Khosla, Bernstein, Berg, and Fei-Fei]{imagenet}
Olga Russakovsky, Jia Deng, Hao Su, Jonathan Krause, Sanjeev Satheesh, Sean Ma,
  Zhiheng Huang, Andrej Karpathy, Aditya Khosla, Michael Bernstein,
  Alexander~C. Berg, and Li~Fei-Fei.
\newblock {ImageNet Large Scale Visual Recognition Challenge}.
\newblock \emph{International Journal of Computer Vision (IJCV)}, 115\penalty0
  (3):\penalty0 211--252, 2015.

\bibitem[Truong et~al.(2019)Truong, Luu, Duong, Le, and Tran]{glow_nxn}
Thanh{-}Dat Truong, Khoa Luu, Chi~Nhan Duong, Ngan Le, and Minh{-}Triet Tran.
\newblock Generative flow via invertible nxn convolution.
\newblock \emph{CoRR}, abs/1905.10170, 2019.
\newblock URL \url{http://arxiv.org/abs/1905.10170}.

\end{thebibliography}

\appendix
\end{document}